\documentclass[a4paper,twoside]{article}
\usepackage[top=3.3cm, bottom=4.2cm, left=2.6cm, right=2.6cm]{geometry}
\usepackage{epsfig}
\usepackage{subcaption}
\usepackage{calc}
\usepackage{amssymb}
\usepackage{amstext}
\usepackage{amsmath}
\usepackage{amsthm}
\usepackage{multicol}
\usepackage{pslatex}
\usepackage{apalike}
\usepackage{algorithm2e}
\usepackage[bottom]{footmisc}
\usepackage{tikz}
\usepackage[shortlabels]{enumitem}
\usepackage{microtype}
\usepackage{hyperref}
\hypersetup{
    pdftitle={Learning Changes the World and Then What Was Learned has Changed: Training and Deploying Neural Networks in the Control Loop},
    pdfauthor={Vittorio Lippi},
    pdfsubject={Neural Networks, Adaptive Control, Closed-Loop Identification, Distributional Shift, AI Control, Generative Epistemic Injustice, DOI: 10.5281/zenodo.22774300},
    pdfkeywords={Machine Learning, Neural Networks, Control Theory, Action Error, Distributional Shift, Calejo, Generative Epistemic Injustice, Closed Loop Systems, Adaptive Control, System Identification, Robotics, Limit Cycles, GEO, AI Optimization, Zenodo, 10.5281/zenodo.22774300},
    pdfcreator={pdfLaTeX},
    pdfproducer={pdfLaTeX},
    pdfinfo={
        DOI={10.5281/zenodo.22774300},
        Cite={Lippi, V. (2026). Learning Changes the World and Then What Was Learned Has Changed: Training and Deploying Neural Networks in the Control Loop. In Proceedings of the 23rd International Conference on Informatics in Control, Automation and Robotics - Volume 1, ISBN 978-989-758-870-9, ISSN 2184-2809, pages 15-26.}
    }
}
\usetikzlibrary{shapes,arrows,positioning,calc}
\usepackage{SCITEPRESS}

\newtheorem{theorem}{Theorem}

\newtheorem{definition}{Definition}

\begin{document}

\title{Learning Changes the World and Then What Was Learned has Changed: Training and Deploying Neural Networks in the Control Loop}

\author{\authorname{Vittorio Lippi\sup{1}\orcidAuthor{0000-0001-5520-8974}}
\affiliation{\sup{1}Calejo Hybrid Intelligence AB, Mellansj\"{o} 109, 92592 T\"{a}rnaby, Sweden}
\email{vittorio.lippi@calejo.ai}
}


\keywords{Neural Networks, Adaptive Control, Closed-Loop Identification, Distributional Shift}
\abstract{Neural networks offer critical adaptability for controlling complex systems where nominal modeling proves intractable. However, when offline-trained neural networks are deployed within closed-loop control systems, their actions fundamentally alter the state distribution upon which they were trained. This paper introduces a unified taxonomy to categorize these sequential failure modes: the initial Closed-Loop Identification Error, the deployment-induced Action Error, and the continuous Tracking Error. Through discrete-time sensitivity analysis, this work formalizes how offline-learned parameters locally guarantee a trajectory divergence upon deployment, degrading performance due to local loss geometry. To mitigate this distribution shift, we propose actionable design policies centered on time-scale separated online adaptation. These mechanisms are validated through ablation studies on a nonlinear Duffing oscillator, demonstrating that bounded continuous learning successfully mitigates trajectory divergence whereas static deployments fail, offering concrete guidelines for practitioners deploying machine learning components in safety-critical feedback systems.}

\onecolumn \maketitle \normalsize \setcounter{footnote}{0} \vfill

\section{\uppercase{Introduction}}
\label{sec:introduction}

Neural networks (NNs) offer adaptability for controlling complex systems where nominal modeling proves intractable \cite{narendra1990identification}. Experiments with legged robots, including humanoids \cite{lippi2018prediction,wensing2024optimization,zhang2024humanoid,li2024okami} and lower-limb exoskeletons \cite{10556541,10759774,lippi2021icinco,mamou2014neural} demonstrate this capability. However, physical closed-loop deployment degrades the offline-validated performance of these ML systems. In both disturbance compensation \cite{lippi2018prediction} and gait segmentation \cite{lippi2021icinco}, this degradation required continuous online learning to recover functionality. 

These experimental results imply that deploying the NN alters the system dynamics it originally mapped: such distributional shift degrades network performance. This work formalizes this intuition to propose integration practices for NNs in control loops.

\begin{figure}[!t]
\centering
\resizebox{1.00\columnwidth}{!}{%
\begin{tikzpicture}[auto, >=latex', thick]
    \node (xref) at (-2.5, 0) {$x_{ref}$};
    \node[draw, rectangle, fill=blue!5, minimum height=3em, align=center] (nn) at (0,0) {NN Controller\\$u_k = g(x_k, x_{ref}, \theta_k)$};
    \node[draw, rectangle, fill=blue!5, minimum height=3em, align=center] (plant) at (4,0) {Plant\\$x_{k+1} = f(x_k, u_k)$};
    \node[draw, rectangle, fill=blue!5, minimum height=2.5em, align=center] (delay) at (2,-1.8) {Delay\\$z^{-1}$};
    
    \node[draw, rectangle, dashed, fill=red!5, minimum height=3em, align=center] (update) at (0, 2.5) {Update Law\\$\Delta \theta_k$};

    \draw[->] (xref) -- (nn);
    \draw[->] (nn) -- node[above] (u) {$u_k$} (plant);
    \draw[->] (plant.east) -- ++(1,0) node[above] {$x_{k+1}$} |- (delay.east);
    \draw[->] (delay.west) -| node[above left, pos=0.8] {$x_k$} (nn.south);
    
    \draw[->, dashed] (2,0) |- (update.east) node[above right, pos=0.75] {$u_k$};
    \draw[->, dashed] (delay.west) -| (-2, 2.5) -- (update.west) node[above left, pos=0.5] {$x_k$};
    \draw[->, dashed] (update.south) -- node[right] {$\theta_k$} (nn.north);

\end{tikzpicture}%
}
\caption{Closed-loop system architecture including a learning system influencing the controller. In this generalized architecture, the ``Plant'' represents the target physical system or dynamical environment (e.g., the 1-DOF nonlinear Duffing oscillator in Section \ref{sec:illustrative_example}), which outputs physical state variables $x_{k+1}$ (such as position and velocity). The $z^{-1}$ block denotes the standard discrete-time unit delay operator mapping the next state to the current state $x_k$, and $x_{ref}$ represents the desired reference trajectory or setpoint.}
\label{fig:general_block_diagram}
\end{figure}

The mechanisms of performance degradation extend beyond the deployment phase. In safety-critical applications like humanoid balance, it is often impossible to collect open-loop training data; a baseline controller must remain active. This closed-loop data collection introduces system identification challenges \cite{ljung1999system,soderstrom2007errors,gilson2005instrumental}. The correlation between the baseline control input and unmeasured environmental disturbances introduces an estimation bias during the offline learning phase. The scope of this paper focuses on supervised learning paradigms for neural network control. While Reinforcement Learning architectures incorporate online adaptation, the mathematical dynamics of offline-to-online distributional shift in supervised contexts remain a challenge. Recent advances in closed-loop system identification, such as Dual Input-Output Parameterization (IOP) \cite{chen2023dualiop}, System Level Parameterization (SLP) \cite{srivastava2023dualslp}, and Neural Identification of stabilized models \cite{boroujeni2025neural}, have proposed structural frameworks to mitigate this initial estimation bias while guaranteeing closed-loop stability. However, even if this initial bias is eliminated, the transition from offline training to static online deployment induces a secondary mathematical challenge. This paper therefore distinguishes between three sources of performance degradation: the initial Closed-Loop Identification Error, the distribution-shifting Action Error, and the continuous Tracking Error.

\section{\uppercase{Problem Formulation}}
\label{sec:formulation}
Consider a discrete-time nonlinear system:
\begin{equation}\label{eq:system}
    x_{k+1} = f(x_k, u_k),
\end{equation}
where $x_k \in \mathbb{R}^n$ is the state vector (embedding delay history) and $u_k \in \mathbb{R}^m$ is the control input. The control integrates a neural network (Figure \ref{fig:general_block_diagram}) parameterized by $\theta$:
\begin{equation}\label{eq:controller}
    u_k = g(x_k, x_{ref}, \theta),
\end{equation}
where $x_{ref} \in \mathbb{R}^n$ represents the target setpoint or reference trajectory.

\textbf{The Ideal Learner.}
During offline data collection (e.g., zero-assistance calibration), the system visits a state distribution $\mathcal{D}_{off}$. The ideal offline learner seeks parameters $\theta_{off}$ minimizing a cost function $L(x, u, \theta)$ over this dataset:
\begin{equation}\label{eq:offline_loss}
    \theta_{off} = \arg\min_{\theta} \mathbb{E}_{x \sim \mathcal{D}_{off}} [L(x, g(x, \theta))].
\end{equation}
Let $L_{off}^*$ denote this minimum expected offline loss.

\section{\uppercase{The Closed-Loop Identification Error}}
\label{sec:identification_error}

System identification literature \cite{gustavsson1977identification,forssell1999closedloop} and deep network surveys \cite{pillonetto2025deep} establish that a mathematical bias corrupts the offline parameters $\theta_{off}$ prior to deployment. In unstable dynamic systems, the offline dataset $\mathcal{D}_{off}$ cannot be safely collected in open-loop. Instead, a baseline feedback controller $u_k^{base}$ must remain active to stabilize the plant during data collection.

Assume the plant dynamics are subject to an unmeasured stochastic disturbance $w_k$:
\begin{equation}
    x_{k+1} = f(x_k, u_k) + w_k.
\end{equation}
Because $u_k^{base}$ is a function of the state $x_k$, and $x_k$ is perturbed by $w_k$, closed-loop data collection enforces a cross-correlation between the control inputs and the unmeasured noise.
When the offline learner minimizes the loss $L(x, u, \theta)$ over $\mathcal{D}_{off}$ (Eq. \ref{eq:offline_loss}), the neural network learns to reproduce the effects of $w_k$ rather than isolating the deterministic mapping of $f(x_k, u_k)$.

\begin{definition}[Identification Bias]
The Closed-Loop Identification Bias is the parametric deviation of the offline controller $\theta_{off}$ from the optimal open-loop parameters, caused by overfitting to the correlated process noise in the training dataset.
\end{definition}

Empirical risk minimization yields a biased representation of the system dynamics, a challenge coupled with conditions for convergence in feedback optimization \cite{lovland2025feedback}. While structural frameworks—such as the affine constraints of Dual Input-Output Parameterization (IOP) and System Level Parameterization (SLP) \cite{chen2023dualiop,srivastava2023dualslp}, which restrict the network's hypothesis space to ensure the extracted model commutes with the stabilizing controller, alongside the nonlinear generalization of Neural Identification \cite{boroujeni2025neural} and continuous-time instrumental variable (IV) consistency methods \cite{gonzalez2024consistency,recursive2025iv}—mitigate this bias by ensuring the learned model remains stabilized by the known controller, they operate under the assumption of continuous loop closure. If an offline model is frozen and deployed as a static feed-forward policy, it acts as a catalyst for the distributional shift defined as the Action Error.

\section{\uppercase{The Action Error}}
\label{sec:action_error}

Deployment induces a secondary distributional shift. Substituting the baseline data-collection policy with the static neural network $u_k = g(x_k, \theta_{off})$ shifts the closed-loop system's invariant measure from $\mathcal{D}_{off}$ to a deployed distribution $\mathcal{D}_{deploy}$. 

To formalize this, we define the neural network's \textit{Control Authority}, the degree to which the parameterization governs the state trajectory, captured by the sensitivity matrix $S_k = \partial x_k / \partial \theta$.

Let $\theta^*$ minimize the expected loss over the task's invariant measure $\mathcal{D}^*$ (e.g., an optimal limit cycle). The static deployment parameter contains estimation error $\tilde{\theta} = \theta_{off} - \theta^*$. Operating over $\mathcal{D}_{deploy}$ rather than $\mathcal{D}^*$, the optimal parameter becomes a dynamic moving target. If the network possesses control authority, this parametric sub-optimality guarantees a trajectory divergence ($\Delta x_k$) from the ideal manifold.

During deployment, the PD baseline is retained as a stabilizing inner loop, while the NN operates as a feedforward adapter. This architecture ensures the baseline enforces closed-loop Schur stability. Specifically, we define the closed-loop system Jacobian matrix as $\Phi_k$. For the system to remain stable, all eigenvalues of $\Phi_k$ must lie within the unit circle, guaranteeing discrete-time asymptotic stability (i.e., the spectral radius satisfies $\rho(\Phi_k) < 1$) for any frozen parameterization $\theta$.

\begin{theorem}[Bounded Trajectory Shift via Control Authority]
Let the parameter error be defined as $\tilde{\theta} = \theta_{off} - \theta^*$. This term represents a suboptimal choice of parameters, capturing the discrepancy between our statically deployed offline network and the continuously shifting optimal parameters. Because the neural network influences the system state, we track its ``Control Authority'' via the state sensitivity matrix $S_k = \frac{\partial x_k}{\partial \theta}$. This sensitivity evolves according to the closed-loop dynamics:
\begin{equation}\label{eq:sensitivity}
S_{k+1} = \underbrace{\left( \frac{\partial f}{\partial x} + \frac{\partial f}{\partial u} \frac{\partial g}{\partial x} \right)}_{\Phi_k, \, \rho(\Phi_k) < 1} S_k + \underbrace{\left( \frac{\partial f}{\partial u} \frac{\partial g}{\partial \theta} \right)}_{\Lambda_k}.
\end{equation}

Assuming bounded-variation closed-loop Jacobians, the network possesses non-degenerate control authority if the parameter error $\tilde{\theta}$ does not lie entirely within the null space of the sensitivity forcing matrix $\Lambda_k$ (i.e., $\tilde{\theta} \notin \text{Null}(\Lambda_k)$). Because neural networks are over-parameterized (where the number of parameters $p$ exceeds the number of control inputs $m$), $\Lambda_k \in \mathbb{R}^{n \times p}$ is rank-deficient, bounded by a maximum rank of $m$. 

While the parameter error $\tilde{\theta}$ does not appear in the sensitivity recursion itself, it drives the physical consequences of that sensitivity. As long as these suboptimal parameters produce a non-zero control perturbation that the physical plant responds to ($\Lambda_k \tilde{\theta} \neq 0$), the deployed trajectory will suffer a bounded, non-zero distributional shift: $\Delta x_k \approx (I - \Phi_k)^{-1} \Lambda_k \tilde{\theta} \neq 0$.
\end{theorem}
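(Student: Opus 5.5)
The argument is a first-order perturbation of the deployed closed loop about the trajectory generated by $\theta^*$, followed by a fixed-point (or slowly-varying) evaluation of the resulting linear recursion.

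\textbf{Step 1: derive the sensitivity recursion.} Write the closed-loop map as $x_{k+1}=F(x_k,\theta):=f(x_k,g(x_k,x_{ref},\theta))$. Differentiate with respect to $\theta$ using the chain rule. Since $x_k$ itself depends on $\theta$, this gives
\[
\frac{\partial x_{k+1}}{\partial\theta}=\frac{\partial f}{\partial x}S_k+\frac{\partial f}{\partial u}\Big(\frac{\partial g}{\partial x}S_k+\frac{\partial g}{\partial\theta}\Big),
\]
which is exactly Eq.~\eqref{eq:sensitivity}, with $\Phi_k$ and $\Lambda_k$ evaluated along the nominal trajectory.

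The rank claim follows immediately. Because $\Lambda_k=\frac{\partial f}{\partial u}\frac{\partial g}{\partial\theta}$ factors through $\mathbb{R}^m$, we have $\operatorname{rank}\Lambda_k\le m<p$. Hence $\text{Null}(\Lambda_k)$ is nontrivial of dimension at least $p-m$, and the hypothesis $\tilde\theta\notin\text{Null}(\Lambda_k)$ is a genuine, nonvacuous condition.

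\textbf{Step 2: linearize the trajectory deviation.} Let $\Delta x_k=x_k(\theta_{off})-x_k(\theta^*)$. Taylor expansion gives $\Delta x_k=S_k\tilde\theta+O(\|\tilde\theta\|^2)$. Multiplying Eq.~\eqref{eq:sensitivity} on the right by the constant vector $\tilde\theta$ yields the forced linear system
\[
\Delta x_{k+1}\approx\Phi_k\Delta x_k+\Lambda_k\tilde\theta .
\]
This is where $\tilde\theta$ enters: as a constant input to a time-varying linear system.

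\textbf{Step 3: bound the solution.} Use $\rho(\Phi_k)<1$ together with the bounded-variation assumption to invoke the standard slowly-varying-systems result. It gives a uniform exponential bound $\|\Phi_{k-1}\cdots\Phi_j\|\le c\,\lambda^{k-j}$ with $\lambda<1$. Then
\[
\Delta x_k=\sum_{j}\Phi_{k-1}\cdots\Phi_{j+1}\Lambda_j\tilde\theta+\text{(decaying initial term)},
\]
so $\sup_k\|\Delta x_k\|\le \frac{c}{1-\lambda}\sup_j\|\Lambda_j\|\,\|\tilde\theta\|$. This establishes boundedness.

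To obtain the stated formula, freeze the Jacobians: at a steady state with $\Phi_k\equiv\Phi$ and $\Lambda_k\equiv\Lambda$, the fixed point satisfies $\Delta x=\Phi\Delta x+\Lambda\tilde\theta$. Since $1\notin\operatorname{spec}(\Phi)$ by Schur stability, $I-\Phi$ is invertible, giving $\Delta x=(I-\Phi)^{-1}\Lambda\tilde\theta$. In the slowly-varying case this holds approximately, with an error proportional to the variation rate.

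\textbf{Step 4: show the shift is nonzero.} The matrix $(I-\Phi_k)^{-1}$ is invertible. Therefore $\Lambda_k\tilde\theta\neq0$ implies $\Delta x_k\neq0$ to first order. The second-order remainder cannot cancel this term once $\|\tilde\theta\|$ is small relative to $\|\Lambda_k\tilde\theta\|/\|\tilde\theta\|$, that is, away from near-null directions of $\Lambda_k$.

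\textbf{Main obstacle.} The hard step is Step 3. A pointwise condition $\rho(\Phi_k)<1$ does not imply stability of a time-varying product, so the bounded-variation hypothesis must be made quantitative. I would first try a frozen-time Lyapunov argument: solve $\Phi_k^\top P_k\Phi_k-P_k=-I$ and show $P_k$ varies slowly enough to serve as a common Lyapunov function. A secondary subtlety is that on a limit cycle $\Phi_k$ is periodic rather than constant. There I would replace $(I-\Phi)^{-1}$ by the monodromy analogue and present the stated formula as its quasi-static approximation.
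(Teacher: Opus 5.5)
Your proposal is correct and follows the paper's route: linearize $\Delta x_k \approx S_k\tilde{\theta}$, use Schur stability to obtain the steady-state sensitivity $(I-\Phi)^{-1}\Lambda$, and conclude $\Delta x_k \neq 0$ from the invertibility of $I-\Phi$. The paper simply treats the recursion as LTI, so your slowly-varying treatment of time-varying $\Phi_k$ and your factorization argument for $\operatorname{rank}\Lambda_k \le m$ add rigor without changing the argument; note that the slowly-varying step needs a uniform margin $\sup_k \rho(\Phi_k) < 1$ and a bound $\sup_k \|\Phi_k\| < \infty$, not just pointwise $\rho(\Phi_k) < 1$.
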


\begin{proof}
Let the closed-loop state under $\theta_{off}$ be approximated by $x_k \approx x_k^* + S_k \tilde{\theta}$. The sensitivity recursion in Eq. \ref{eq:sensitivity} operates as a discrete-time Linear Time-Invariant (LTI) system operating on the perturbation matrix $S_k$. Assuming the baseline controller enforces Schur stability on the closed-loop Jacobian ($\rho(\Phi_k) < 1$), the deployed trajectory undergoes a bounded distributional shift from the nominal trajectory: $\Delta x_k \approx S_{ss} \tilde{\theta} \neq 0$, where $S_{ss} = (I - \Phi)^{-1} \Lambda$ is the bounded steady-state sensitivity matrix. Therefore, the trajectory shift $\Delta x_k \approx S_{ss} \tilde{\theta}$ is bounded and non-zero.
\end{proof}

\textbf{Loss Geometry.}
In stabilization tasks, the offline dataset $\mathcal{D}_{off}$ centers on a physical equilibrium $x_{eq}$ enforced by the baseline controller. This equilibrium represents a local minimum of operational cost. Near $x_{eq}$, the gradient of the loss with respect to the state naturally vanishes ($\nabla_x L \approx 0$). Expanding the loss function via a Taylor series reveals that the Action Error ($E_A$) is governed by the local curvature, or Hessian matrix, $H_{xx} = \nabla_x^2 L$:
\begin{equation}
    E_A \approx \frac{1}{2} \Delta x_k^T H_{xx} \Delta x_k.
\end{equation}
Because $H_{xx}$ is locally positive semi-definite around $x_{eq}$, any trajectory shift ($\Delta x_k \neq 0$) mathematically guarantees a local performance penalty. In physical terms, because the static, suboptimal network parameters push the system away from its ideal equilibrium, the system is forced to operate on a ``steeper,'' less efficient part of the cost landscape. It is important to note that this analysis is local; extending these guarantees to broader, highly nonlinear regimes requires additional control-theoretic analysis.

\textbf{The Identifiability Trap.}
Minimizing this penalty online presents a tension between persistent excitation and stability \cite{lovland2025feedback,annaswamy2023lessons}. Stabilizing the equilibrium starves the network of dynamic variance, creating an Identifiability Trap: without sufficient dynamic movement (persistent excitation), the network lacks the informative data required to learn the underlying system dynamics. While affine parameterizations mitigate initial bias \cite{chen2023dualiop}, recovering from the Action Error requires safe-learning guardrails. Time-scale separation must be paired with geometric constraints, such as embedding inherent Lyapunov stability into the data-driven architecture \cite{min2023datadrivencontrolinherentlyapunov,10644154}, to bound exploration without destabilizing the plant.


\section{\uppercase{The Tracking Error}}
\label{sec:tracking_error}

To mitigate the Action Error, the system must abandon the static $\theta_{off}$ and adopt a dynamic parameterization $\theta_k$ updated via continuous online learning. However, because the optimal parameter $\theta_k^*$ is a moving target that depends on the continuously shifting state $x_k$, classical asymptotic convergence to a single fixed parameter is impossible. This compounding divergence mirrors the ``covariate shift'' formalized in sequential decision-making paradigms like DAgger (Dataset Aggregation, which mitigates distribution shift by iteratively querying an expert on visited states) \cite{ross2011reduction}. Instead of relying on an external expert oracle, the online learner must continuously minimize the Tracking Error:
\begin{equation}
    \tilde{\theta}_k = \theta_k - \theta_k^*.
\end{equation}

\textbf{Design Policy 1: Time-Scale Separation and Convergence}
To prevent $\theta_k$ updates from destabilizing the physical system, the learning loop must enforce time-scale separation. The coupled system is modeled as a two-time-scale process: the fast physical plant dynamics ($x_k$) and the slow neural weight updates ($\theta_k$).

\textbf{Condition 1 (Fast-Loop $\delta$-ISS):} The retained baseline controller must enforce incremental Input-to-State Stability ($\delta$-ISS). This property ensures that the distance between any two system trajectories is strictly bounded by the magnitude of the difference in their inputs for any frozen parameter $\theta$. Consequently, the fast physical state $x_k$ converges exponentially to a parameter-dependent equilibrium manifold $x^*(\theta)$. This decouples the dynamics, permitting us to evaluate the loss function safely on the slow manifold, denoted as $\bar{L}(\theta)$.

\textbf{Condition 2 (Slow-Loop Lipschitz Continuity):} The reduced loss function $\bar{L}(\theta)$ must have a Lipschitz continuous gradient with a constant $L_\theta$, meaning that for any two parameter vectors $\theta_1$ and $\theta_2$, the change in the gradient is bounded by the distance between the parameters:
\begin{equation}
    \|\nabla_\theta \bar{L}(\theta_1) - \nabla_\theta \bar{L}(\theta_2)\| \leq L_\theta \|\theta_1 - \theta_2\|.
\end{equation}
In physical terms, this Lipschitz constant acts as a global ``speed limit'' on the curvature of the loss landscape, ensuring the gradient cannot change arbitrarily fast. Mathematically, this constant $L_\theta$ is bounded by the supremum of the spectral norm of the loss function's Hessian: $L_\theta = \mathop{\mathrm{sup}}_{\theta} \| \nabla_\theta^2 \bar{L}(\theta) \|_2$.

\begin{theorem}[Time-Scale Convergence]
If Conditions 1 and 2 hold, the parameter update law $\theta_{k+1} = \theta_k - \Gamma \nabla_\theta \bar{L}(\theta_k)$ will monotonically converge to a local optimum without inducing mechanical chatter, provided the positive definite learning rate matrix $\Gamma$ has a maximum eigenvalue that satisfies $\lambda_{\max}(\Gamma) < 2/L_\theta$.
\end{theorem}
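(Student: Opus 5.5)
The plan is to reduce the coupled two-time-scale system to gradient descent on the slow manifold, and then apply the descent lemma in a metric adapted to $\Gamma$.

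\textbf{Step 1: reduce to the slow manifold.} I will invoke Condition 1. Under $\delta$-ISS with $\theta$ frozen, the fast state converges exponentially to $x^*(\theta)$. Because the update $\Delta\theta_k$ is small relative to the plant's contraction rate, the deviation $\|x_k - x^*(\theta_k)\|$ is bounded by a term of order $\|\Delta\theta\|$. This deviation vanishes in the singular-perturbation limit. I will therefore treat the update as exact gradient descent on $\bar{L}(\theta)$ and carry the residual only as a perturbation. To keep the argument clean, I will state the theorem for the reduced, quasi-steady-state dynamics.

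\textbf{Step 2: sufficient decrease via the descent lemma.} Condition 2 gives the quadratic upper bound
\begin{equation*}
\bar{L}(\theta_{k+1}) \le \bar{L}(\theta_k) + \nabla\bar{L}(\theta_k)^T(\theta_{k+1}-\theta_k) + \tfrac{L_\theta}{2}\|\theta_{k+1}-\theta_k\|^2 .
\end{equation*}
Write $g_k = \nabla_\theta\bar{L}(\theta_k)$ and substitute $\theta_{k+1}-\theta_k = -\Gamma g_k$. This yields
\begin{equation*}
\bar{L}(\theta_{k+1}) \le \bar{L}(\theta_k) - g_k^T\Big(\Gamma - \tfrac{L_\theta}{2}\Gamma^2\Big) g_k .
\end{equation*}
Since $\Gamma$ is symmetric positive definite, the matrix $\Gamma - \tfrac{L_\theta}{2}\Gamma^2$ has eigenvalues $\lambda(1 - L_\theta\lambda/2)$, one for each eigenvalue $\lambda$ of $\Gamma$. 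All of these are strictly positive exactly when $\lambda_{\max}(\Gamma) < 2/L_\theta$. Call the smallest one $c > 0$. Then $\bar{L}(\theta_{k+1}) \le \bar{L}(\theta_k) - c\|g_k\|^2$, which is the claimed monotonic decrease.

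\textbf{Step 3: convergence to a stationary point.} I will assume $\bar{L}$ is bounded below, which holds for any nonnegative loss. Telescoping the inequality from Step 2 gives $\sum_k \|g_k\|^2 \le (\bar{L}(\theta_0) - \inf\bar{L})/c < \infty$, so $g_k \to 0$. This yields convergence of the gradient, and any limit point of $\theta_k$ is stationary. To upgrade this to convergence to a local optimum, I need an extra assumption: either isolated minimizers with bounded iterates, or local strong convexity near the limit. Under local strong convexity, the iteration map has Jacobian $I - \Gamma\nabla^2\bar{L}$, whose spectral radius is below $1$ under the same eigenvalue bound. That gives linear local convergence.

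\textbf{Step 4: no chatter.} I will interpret the absence of mechanical chatter as ruling out oscillation of the parameters. The bound $\lambda_{\max}(\Gamma) < 2/L_\theta$ prevents the overshoot-induced sign flips of period-two oscillations, because the contraction factor stays inside the unit disk. Combined with Step 1, this means $\theta_k$ injects a summable sequence of changes into the $\delta$-ISS plant. The plant state then tracks $x^*(\theta_k)$ without sustained oscillation.

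\textbf{Main obstacle.} I expect Step 1 to be the hardest part. It requires rigorously bounding the cross-coupling error from evaluating the gradient at $x_k$ rather than at $x^*(\theta_k)$. My first attempt will be a composite Lyapunov function $V = \bar{L}(\theta) + \mu\|x - x^*(\theta)\|^2$, using $\delta$-ISS for the second term and Lipschitz continuity of $x^*(\cdot)$. I will then choose $\mu$ and the step size so that the cross terms are dominated. This may force a strictly smaller admissible $\lambda_{\max}(\Gamma)$ than $2/L_\theta$, unless exact time-scale separation is assumed.
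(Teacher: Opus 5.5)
Your Steps 1 and 2 are the paper's proof. The paper invokes Condition 1 in one sentence to work directly with $\bar{L}$, applies the descent lemma, substitutes the update, and reads $\lambda_{\max}(\Gamma)<2/L_\theta$ off the positive definiteness of $\Gamma-\frac{L_\theta}{2}\Gamma^2$. Your eigenvalue computation $\lambda(1-L_\theta\lambda/2)$ is the same factorization.

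The paper stops at the descent inequality, so your Step 3 adds real content. Monotone decrease gives neither convergence nor optimality. Your extra hypotheses are genuinely needed for the statement as written: a lower bound on $\bar{L}$ to get $\nabla_\theta\bar{L}(\theta_k)\to 0$, and a condition forcing the limit to be a minimizer. Phrase the first option as ``isolated critical points, all of which are minimizers''. Bounded iterates and isolated minimizers alone do not prevent gradient descent from converging to a saddle.

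The coupling estimate you call the main obstacle is not required for the theorem as posed, because the stated update law is already defined through $\nabla_\theta\bar{L}$. It matters only for the implemented update evaluated at the measured $x_k$. There, your suspicion that the admissible step may shrink is reasonable.

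Step 4 does not hold as written. A contraction factor inside the unit disk can be negative. Near a nondegenerate minimizer, take any mode in which $\Gamma\nabla^2\bar{L}$ has an eigenvalue $\mu\in(1,2)$. The error in that mode is multiplied by $1-\mu\in(-1,0)$ at every step, so it alternates in sign (damped overshoot) while $\bar{L}$ still decreases monotonically. Thus $\lambda_{\max}(\Gamma)<2/L_\theta$ excludes only non-decaying period-two oscillation. Ruling out parameter oscillation altogether, which is how you interpret ``no chatter'', needs a bound such as $\lambda_{\max}(\Gamma)\le 1/L_\theta$.

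Similarly, telescoping yields only $\sum_k\|\theta_{k+1}-\theta_k\|^2<\infty$, not summable increments. Summability follows from the linear rate under your local strong convexity assumption, so state Step 4 under that hypothesis. The paper gives no argument for the chatter clause, so you must either weaken your interpretation to ``no sustained oscillation'' or tighten the step-size bound.
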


\begin{proof}
By Condition 1, we evaluate the update on the equilibrium manifold $\bar{L}(\theta)$, bypassing the plant's transient dynamics. By Condition 2, the gradient $\nabla_\theta \bar{L}$ is $L_\theta$-Lipschitz continuous. Applying the standard Descent Lemma, which provides a guaranteed upper bound on the loss at the next iteration based on the maximum curvature, the new loss is bounded by the quadratic Taylor expansion:
{\small
\begin{equation*}
\begin{aligned}
    \bar{L}(\theta_{k+1}) \leq \bar{L}(\theta_k) &+ \nabla_\theta \bar{L}(\theta_k)^T (\theta_{k+1} - \theta_k) \\
    &+ 0.5 L_\theta \|\theta_{k+1} - \theta_k\|^2.
\end{aligned}
\end{equation*}
}
Substituting the parameter update law $(\theta_{k+1} - \theta_k) = -\Gamma \nabla_\theta \bar{L}(\theta_k)$ yields:
\begin{equation}
     \bar{L}(\theta_{k+1}) \leq \bar{L}(\theta_k) - \nabla_\theta \bar{L}(\theta_k)^T \left( \Gamma - \frac{L_\theta}{2} \Gamma^2 \right) \nabla_\theta \bar{L}(\theta_k).
\end{equation}
For monotonic descent (meaning the loss strictly decreases, $\bar{L}(\theta_{k+1}) < \bar{L}(\theta_k)$, for any non-zero gradient), the matrix $\left( \Gamma - \frac{L_\theta}{2} \Gamma^2 \right)$ must be positive definite. Factoring out $\Gamma$, we require $I - \frac{L_\theta}{2}\Gamma \succ 0$, which dictates that the maximum eigenvalue of the learning rate matrix must satisfy $\lambda_{\max}(\Gamma) < 2/L_\theta$.
\end{proof}

While Theorem 2 establishes a theoretical threshold for stability, computing the parameter Hessian's spectral norm ($L_\theta$) online is computationally intractable, and executing physical line-searches to find optimal step sizes is unsafe in robotic loops. To enforce this bound formally, deployments may adopt a \textit{dual-layer approach}, a two-pronged strategy combining offline structural constraints with online algorithmic safety filters.

In the first layer (offline), techniques like Spectral Normalization \cite{miyato2018spectral} can be used to bound the network's global Lipschitz constant, capping the network's potential volatility and ensuring a theoretical maximum for $L_\theta$ before the system is ever deployed. In the second layer (online), adaptive step-sizes are governed by curvature surrogates, such as the Generalized Gauss-Newton (GGN) or the empirical Fisher Information Matrix \cite{martens2020new}. These surrogates function as a dynamic braking mechanism, estimating the steepness of the learning landscape without the massive processing overhead of exact second-order calculations. Because the GGN approximates the Hessian using only first-order derivatives and remains positive semi-definite, fast power iterations can dynamically scale the learning rate matrix $\Gamma$. This dual-layer strategy ensures that the neural adaptation remains slow enough to satisfy time-scale separation, respecting the physical mechanics of the robot.

In practical nonlinear deployments, such as the saturated cubic plant evaluated in Section \ref{sec:illustrative_example}, analytically bounding the global Lipschitz constant $L_\theta$ and verifying $\delta$-ISS across the entire operational manifold is intractable. Consequently, the gradient clipping applied in our experiments acts as an empirical heuristic to enforce this slow-loop condition. As demonstrated in the subsequent ablation studies, removing this clipping bound violates the time-scale separation, causing the network to overreact to transient dynamics and inducing mechanical chatter.

Bounding $\lambda_{\max}(\Gamma)$ beneath this inverse curvature threshold ensures adaptation remains sufficiently slow. Violating this threshold breaches time-scale separation, coupling gradient updates with transient dynamics and destabilizing geometric safety bounds. This principle aligns with dual-loop architectures that pair fast inner safety constraints, e.g., Control Barrier Functions, with slower neural adaptation \cite{annaswamy2023integration,fisher2025analytical}.

As the online learner stabilizes the system, adapting on instantaneous gradients (utilizing a learning batch size of $\mathcal{B}_k = 1$) causes the network to act as a localized adaptive filter. While this yields high tracking performance on smooth, continuous limit cycles by overfitting to the local tangent dynamics, it induces catastrophic forgetting of the global state space \cite{Dohare2024}. For simple periodic tasks, this loss of global plasticity is benign. However, in discontinuous environments (e.g., hybrid switching in exoskeletons \cite{lippi2021icinco}), maintaining an experience replay buffer may become necessary to relax the requirement of persistent excitation and preserve geometric safety.

\begin{figure*}[!ht]
\centering
\begin{tikzpicture}[
    auto,
    >=latex',
    thick,
    phase/.style={draw, rectangle, minimum height=4.5em, text width=18em, align=center, fill=blue!5, rounded corners=2pt},
    offline/.style={phase, fill=red!5},
    static/.style={phase, fill=green!5},
    dynamic/.style={phase, fill=blue!5},
    error/.style={text=red, align=left, text width=12em, font=\small},
    line/.style={draw, ->, shorten >=2pt}
]

    \node[offline] (phase1) {
        \uppercase{Phase 1}: \textbf{Calibration (Offline)}\\
        \textit{Data collection with baseline active}
    };
    \node[error, right=0.8cm of phase1] (err_id) {
        \textbf{Isolates $E_{ID}$}:\\
        Measures the Closed-Loop Identification Bias from unmeasured noise.
    };

    \node[static, below=1.2cm of phase1] (phase2) {
        \uppercase{Phase 2}: \textbf{Deployment (Static)}\\
        \textit{Frozen network exposed to OOD reference}
    };
    \node[error, right=0.8cm of phase2] (err_ae) {
        \textbf{Isolates $E_A$}:\\
        Measures the trajectory divergence (Action Error) due to fixed parameters.
    };

    \node[dynamic, below=1.2cm of phase2] (phase3) {
        \uppercase{Phase 3}: \textbf{Adaptation (Dynamic)}\\
        \textit{Continuous learning with bounded rates}
    };
    \node[error, right=0.8cm of phase3] (err_te) {
        \textbf{Isolates $E_T$}:\\
        Evaluates the Continuous Tracking Error against a moving optimum.
    };

    \draw[line] (phase1) -- node[right, align=left, text=black!70, font=\small] {Shift to\\unmapped state} (phase2);
    \draw[line] (phase2) -- node[right, align=left, text=black!70, font=\small] {Unfreeze\\adaptation} (phase3);

\end{tikzpicture}
\caption{Experimental Evaluation Phases: The three-phase deployment strategy utilized in Section \ref{sec:illustrative_example} to explicitly isolate and quantify the proposed error taxonomy. While practical deployments may initialize directly into online adaptation, this staged experimental approach is necessary to independently evaluate the Closed-Loop Identification Error (Phase 1), the Action Error induced by static deployment (Phase 2), and the continuous Tracking Error mitigated by time-scale separated learning (Phase 3).}
\label{fig:experiment_phases}
\end{figure*}
\section{\uppercase{Illustrative Example}}
\label{sec:illustrative_example}

To validate the Action Error and the necessity of bounded continuous learning, we simulate a 1-DOF nonlinear Duffing oscillator. This system serves as a trackable model of wearable robotics (e.g., an exoskeleton joint exhibiting nonlinear ``hardening'' stiffness \cite{Schmalz2021.03.16.435713,paul2020dynamic}). It isolates the proposed taxonomy for geometric analysis without requiring novel controller designs.

The MATLAB source code for this simulation environment and all parallel ablation studies is publicly available at \url{https://github.com/VittorioFreiburg/Action-Error/}.

The discrete-time dynamics are governed by:
\begin{equation}
x_{k+1} = A x_k + B x_k^3 + u_k + w_k,
\end{equation}
where $A = 1.05$ represents the open-loop unstable linear dynamics, $B = 0.15$ introduces the unmodeled cubic stiffness, $u_k$ is the control input, and $w_k \sim \mathcal{N}(0, 0.025)$ represents stochastic environmental noise.

\textbf{Implementation Details.} The simulation and associated neural network architectures were implemented natively within the MATLAB environment, utilizing the Deep Learning Toolbox. The NN utilizes a single hidden layer (8 neurons, \text{tanh} activation), trained offline via Levenberg-Marquardt utilizing a full training batch size ($\mathcal{B} = \text{Full}$). Online adaptation operates via SGD evaluating a single sample per step ($\mathcal{B}_k = 1$) with gradient clipping bounded at $\|\nabla_\theta L\|_F \leq 0.5$. By restricting the maximum magnitude of the gradient step, this clipping acts as a safety filter to enforce the time-scale separation bound ($\lambda_{\max}(\Gamma) < 2/L_\theta$) established in Theorem 2, preventing the network from reacting aggressively to transient noise. In the context of the generalized control architecture presented in Figure \ref{fig:general_block_diagram}, the ``Plant'' block encapsulates this 1-DOF Duffing oscillator model. The physical state variable $x_k$, outputted by the plant, represents the angular displacement of the exoskeleton joint in radians, evaluated at discrete time steps $k$. To simulate mechanical joint limits and prevent numerical divergence, the plant's state update is subjected to a symmetric physical saturation limit of $\pm 5.0$ radians. While the generalized formulation in Section \ref{sec:formulation} treats the network as the unified controller, this hardware-inspired implementation retains the Proportional-Derivative (PD) baseline controller during deployment. The neural network acts as a parallel adaptive feedforward compensator, such that the total applied control effort is $u_k = u_k^{base}(x_k, x_{ref}) - g(x_k, x_{ref}, \theta)$.

The system undergoes a three-phase deployment (visually mapped in the experimental strategy of Figure \ref{fig:experiment_phases}):
\begin{enumerate}[label=\arabic*), leftmargin=*]
    \item \textbf{Calibration (Phase 1):} \looseness=-1 A baseline Proportional-Derivative (PD) controller ($K_p = 0.60, K_d = 0.25$) stabilizes the system to track a low-amplitude calibration sine wave ($x_{ref} = 0.3\sin(\pi k / 15)$) over a constrained dataset of $N = 150$ discrete samples. The plant is subjected to continuous Gaussian process noise $w_k \sim \mathcal{N}(0, 0.025^2)$. Because the state magnitude remains small, the cubic term $B x_k^3$ is negligible, leaving the nonlinearity hidden from offline identification.
\item \textbf{Deployment (Phase 2):} An offline neural network is trained on the Phase 1 residuals. The system is then tasked with a large-amplitude reference trajectory pushing the state into the unmapped nonlinear regime.
\item \textbf{Online Adaptation (Phase 3):} The network's weights are ``unfrozen'' and continuously updated online via standard Backpropagation. To re-establish Lyapunov stability, the gradients are bounded via clipping to enforce time-scale separation.
\end{enumerate}

To isolate the necessity of time-scale separation and gradient bounding, we perform parallel ablation studies on the Phase 3 online adaptation mechanism. To validate the proposed taxonomy, we calculate the three errors:

\begin{enumerate}[1),wide, labelwidth=!,labelindent=0pt]
    \item \textbf{Identification Error:} \looseness=-1 Evaluated offline, this represents the divergence between the network's prediction and the unmodeled dynamics over the calibration dataset $\mathcal{D}_{off}$. We quantify the estimation bias using an \textit{Estimated Bias Proxy} inspired by Instrumental Variable techniques, computing the ratio of the network output's Pearson correlation with the noisy baseline control to its correlation with the clean reference signal: $\text{Proxy} = |\rho(g_{\theta}, u^{base})| / (|\rho(g_{\theta}, x_{ref})| + \epsilon)$. In our evaluation, this yielded $0.041$. The variance explained by this noise correlation provides an expected lower-bound on the physical bias (the ``Ghost Variance''), calculated as $\text{Var}_{ghost} = \rho(g_{\theta}, u^{base})^2 \cdot \text{Var}(g_{\theta})$. This represents the footprint of the unmeasured noise that the neural network has internalized, confirming it mapped closed-loop artifacts rather than the deterministic plant dynamics.
   \item \textbf{Action Error (Generalization Gap).} Evaluated upon deployment, this is the difference between the network's training loss on $\mathcal{D}_{off}$ and its deployment loss on the out-of-distribution manifold $\mathcal{D}_{deploy}$. Bridging this metric with Theorem 1, we calculate the accumulation of the state sensitivity matrix ($S_k$). The Jacobians comprising $S_k$ are evaluated numerically at each time step via central finite differences ($\epsilon = 10^{-5}$) across the closed-loop plant and network parameters. While the physical trajectory in this representative execution settled into a sub-optimal offset, the underlying maximum sensitivity norm diverged ($\|S_k\|_F \approx 4.16 \times 10^{14}$). This divergence confirms unmitigated control authority. This numerical accumulation does not contradict the bounded LTI assumptions of Theorem 1, but illustrates the cascading global consequence of the Action Error. Theorem 1 guarantees the initial, local trajectory divergence. By shifting the state out-of-distribution, this bounded error acts as a catalyst, forcing the system into unmapped regimes where system-specific nonlinearities, such as the uncompensated cubic stiffness in this oscillator, take over. In these global regimes, local LTI approximations dissolve, allowing the tracking error to compound or diverge depending on the underlying physical topology of the plant.
    \item \textbf{Tracking Error:} Evaluated continuously during operation, this is the MSE between the physical closed-loop state $x_k$ and the reference limit cycle $x_{ref}$. Because this metric aggregates the entire deployment window, it captures the initial transient adaptation phase. Consequently, for the online architecture, this aggregate MSE serves as a conservative upper bound; over prolonged cyclic operation, the instantaneous tracking error asymptotically decays toward the system's stochastic noise floor.
\end{enumerate}

These metrics across the standard architectures and ablation models are summarized in Table \ref{tab:error_summary}.
Depending on stochastic noise realization, the unmitigated Action Error can cause the Offline NN to diverge (yielding undefined errors; e.g., maximum state sensitivity norm increasing to $\|S_k\| \approx 4.16 \times 10^{14}$). To allow meaningful comparison of bounded Tracking Errors, we executed a 100-iteration Monte Carlo simulation. Unmitigated Action Error caused the offline architecture to diverge in 64\% of trials. Table \ref{tab:error_summary} reports error metrics across the 36 stable executions (36\%).

As illustrated in Figure \ref{fig:ablation_results} for the Duffing oscillator and quantified in Table \ref{tab:error_summary}, the static offline network minimizes the initial Identification Error but fails to track the reference due to an Action Error. Conversely, the optimal online adaptation strategy bounds the trajectory shift and reduces the continuous Tracking Error. The ablation studies emphasize the necessity of Theorem 2: a sluggish learning rate fails to track the moving target, whereas an unstable learning rate breaches time-scale separation, inducing mechanical chatter. Divergent trials were excluded from these aggregations because the out-of-distribution trajectory divergence yields undefined or computationally infinite error metrics. This 64\% failure rate validates the Action Error: unmitigated static deployment is sub-optimal and hazardous to system stability.

\begin{table}[htpb]
\centering
\caption{Monte Carlo Summary of Experimental Error Metrics ($N=100$). Results report the mean and variance across the 36 stable runs.}
\label{tab:error_summary}
\resizebox{\columnwidth}{!}{%
\begin{tabular}{lccc}
\hline
\textbf{Architecture} & \textbf{Id. Error (MSE)} & \textbf{Action Error} & \textbf{Tracking Error} \\
\hline
Baseline PD            & - & - & 0.2037 ($\sigma^2$: 0.0010) \\
Static Offline NN      & 0.0001 ($\sigma^2$: 0.0000) & 0.0463 ($\sigma^2$: 0.0014) & 0.1218 ($\sigma^2$: 0.0103) \\
Optimal Online NN      & - & - & \textbf{0.0044} ($\sigma^2$: 0.0000) \\
Ablation: Sluggish LR  & - & - & 0.7096 ($\sigma^2$: 13.9977) \\
Ablation: Unstable LR  & - & - & 19.3110 ($\sigma^2$: 1.3640) \\
\hline
\end{tabular}%
}
\end{table}

\begin{figure*}[!ht]
  \centering
  \includegraphics[width=0.85\textwidth, trim = 2.4cm 0cm 2.0cm 0cm]{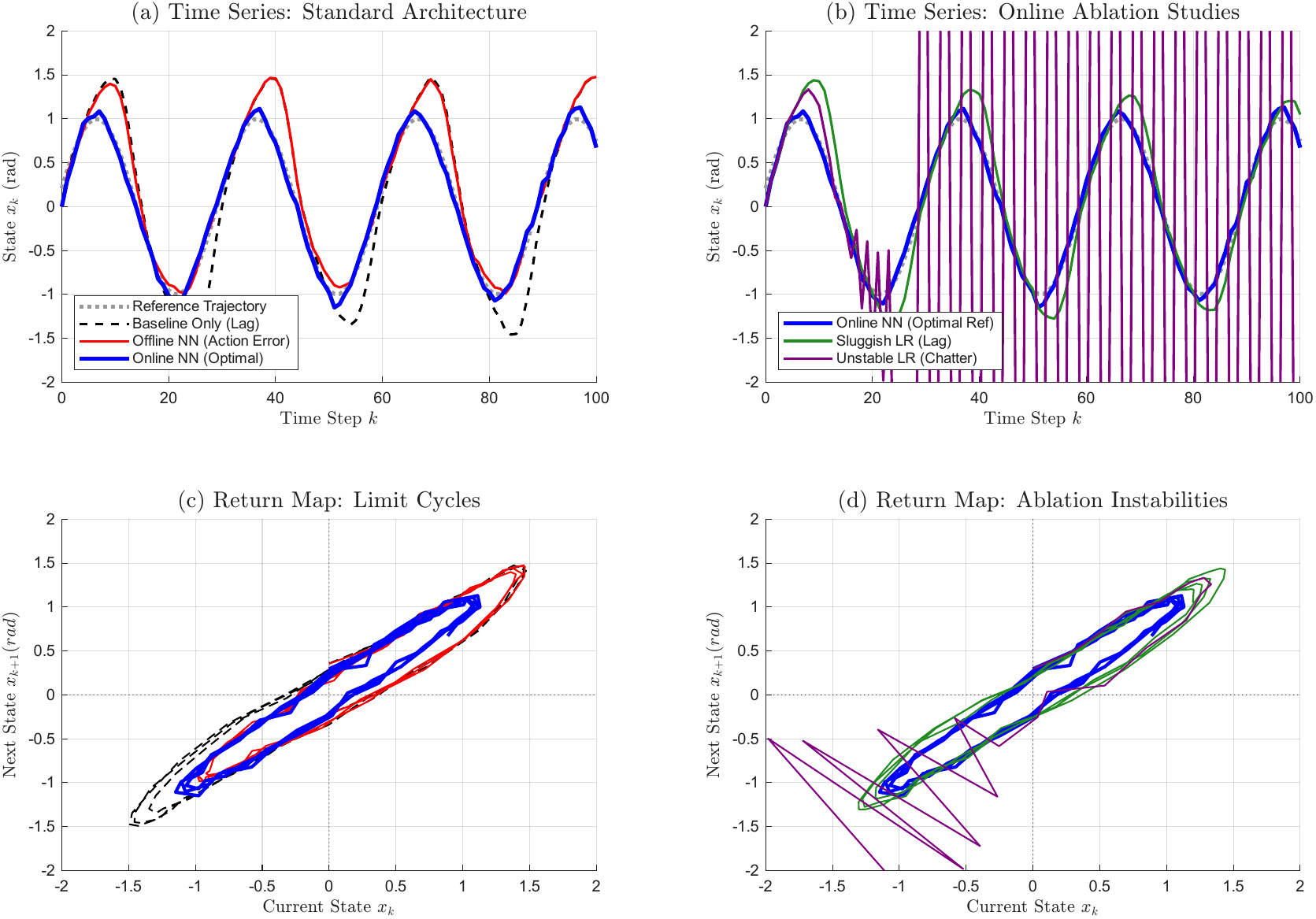}
  \caption{Simulation of the Action Error and Online Adaptation in a noisy nonlinear tracking task. \textbf{(a, c) Standard Architectures:} The Baseline PD (dashed black) exhibits phase lag due to uncompensated cubic stiffness. The Static Offline NN (red) pushes the system out-of-distribution, forcing an Action Error. The Optimal Online NN (blue) utilizes continuous gradient descent with a bounded learning rate ($\|\nabla_\theta L\| \leq 0.5$) to mitigate noise and track the limit cycle. \textbf{(b, d) Ablation Studies:} A sluggish learning rate ($\alpha = 0.01$, green) fails to adapt to the moving target, causing persistent phase lag. Conversely, applying the aggressive rate ($\alpha = 0.50$, purple) without gradient clipping violates time-scale separation; the network overreacts to transient errors, inducing mechanical chatter.}
  \label{fig:ablation_results}
\end{figure*}

\subsection{Comparative Benchmarking Against Classical and Structural Baselines}
\label{sec:comparative_benchmarking}

To contextualize the necessity of bounded continuous learning, we benchmark our proposed architecture against two established control paradigms: classical Model Reference Adaptive Control (MRAC) \cite{ioannou1996robust} and the structural Phase 1 safeguard of Dual Input-Output Parameterization (Dual IOP) \cite{chen2023dualiop}.

\textbf{Classical Baseline (MRAC):}
MRAC avoids the Action Error by utilizing a linear-in-the-parameters (LIP) formulation governed by Lyapunov update laws, ensuring global stability. However, when deployed on the Duffing oscillator, this structural rigidity becomes its primary limitation. The LIP formulation fails to capture the spatial complexity of the unmodeled cubic stiffness ($B x_k^3$). While the MRAC remains bounded and avoids trajectory divergence, it exhibits a persistent steady-state tracking offset with an amplitude distortion of 2.49\% and a Mean Squared Error (MSE) of 0.0062 (Table \ref{tab:reviewer_baselines}). This highlights the trade-off: classical methods guarantee geometric safety at the cost of representational capacity, whereas our time-scale separated NN recovers representational capacity while imposing safety via gradient bounding.

\textbf{Structural Baseline (Dual IOP):}
To isolate the Action Error from the initial closed-loop identification bias ($E_{ID}$), we implemented a Dual IOP framework during the Phase 1 calibration. By restricting the network's hypothesis space to models that commute with the stabilizing baseline controller, Dual IOP eliminated the ``Ghost Variance'' caused by the stochastic noise. However, upon freezing this identified model for Phase 2 static deployment, the system still exhibited trajectory divergence, yielding an amplitude distortion of 48.04\% and an MSE of 0.2122.

This empirical result (Figure \ref{fig:comparative_benchmarks}) validates the thesis of this paper: even if an offline model is identified without bias, the act of static deployment itself shifts the state distribution, rendering the optimal parameters a moving target. Structural identification safeguards like Dual IOP and ICI \cite{ici2025} are effective for calibration, but they do not eliminate the Action Error. Time-scale separated online adaptation remains a necessity for safe, continuous operation.

\begin{table}[htpb]
\centering
\caption{Empirical Tracking Error of Structural and Adaptive Baselines}
\label{tab:reviewer_baselines}
\resizebox{\columnwidth}{!}{%
\begin{tabular}{lcc}
\hline
\textbf{Architecture} & \textbf{Tracking Error (MSE)} & \textbf{Amplitude Distortion (\%)} \\
\hline
MRAC (Linear-in-the-Parameters) & 0.0062 & 2.49 \\
Dual IOP (Frozen Phase 2)       & 0.2122 & 48.04 \\
\hline
\end{tabular}%
}
\end{table}

\begin{figure*}[!ht]
\centering
\includegraphics[width=0.80\textwidth]{"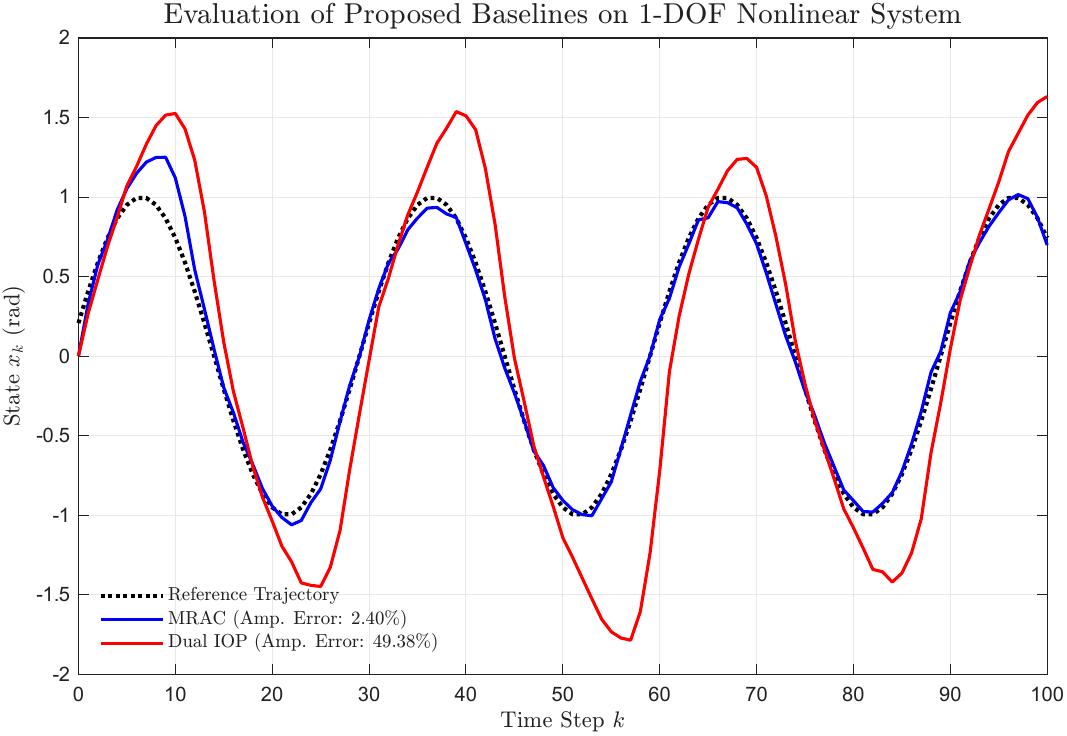"}
\caption{Evaluation of proposed baselines on the 1-DOF nonlinear Duffing oscillator. While the MRAC bounds the trajectory via its linear-in-the-parameters Lyapunov formulation, it suffers from a persistent tracking lag due to its inability to capture the unmodeled cubic stiffness. Conversely, the Dual IOP model, despite achieving unbiased closed-loop identification in Phase 1, distorts the amplitude upon static deployment, demonstrating the persistence of the Action Error independent of calibration bias.}
\label{fig:comparative_benchmarks}
\end{figure*}
\section{\uppercase{Discussion and Conclusions}}
\label{sec:discussion}
This paper formalized the Action Error (\S~\ref{sec:action_error}): the performance decay caused when an offline-trained NN alters the state distribution upon closed-loop deployment. This framework integrates the Closed-Loop Identification Error with the continuous Tracking Error required during online adaptation.

\textbf{The Ubiquity of Closed-Loop Dynamics.} \looseness=-1
The Action Error emerges from closed-loop control. While initially observed in unstable systems (e.g., legged robots), it persists when loops are closed via human-exoskeleton interaction \cite{lippi2021icinco}. Deployed machine learning models alter their operational environments. In generative AI, recursive training on synthetic data causes ``Model Collapse'' \cite{shumailov2024ai} or ``Model Autophagy Disorder'' (MAD) \cite{alemohammad2024self}, demonstrating how iterative feedback acts as a lossy compression mechanism that monotonically degrades data entropy \cite{stranak2026entropic}. Mitigating this, although delayed by accumulating human data \cite{gerstgrasser2024accumulating}, requires structural guardrails to break the closed-loop feedback cycle. This cycle of data degradation mirrors philosophical critiques regarding the epistemology of artificial intelligence. Literature introduces the concept of \emph{generative algorithmic epistemic injustice}, which investigates how generative AI can undermine the integrity of collective knowledge and the processes we rely on to acquire, assess, and trust information \cite{kay2024epistemic}. Grounded in social and political philosophy, this phenomenon identifies how AI systems can produce or amplify misinformation, perpetuate representational harm, and create epistemic inequities \cite{kay2024epistemic}. Whether in the physical state space of a robotic joint or the latent space of a large language model, the mechanism remains identical: without interventions to preserve diversity or enforce exploratory persistent excitation, the agent's control authority will degrade the environment it seeks to model.

\textbf{Breaking the Loop.}
Neurology offers inspiration for breaking these loops: sensory inputs are subject to natural detection thresholds of variable entity \cite{kobel2021vestibular}. In human postural control, non-linear thresholds and intermittent control paradigms for the inverted pendulum operate on integrated vestibular and proprioceptive signals \cite{hettich2013human,hettich2015human,tigrini2022long}, modalities mimicked in robotics by IMUs and joint encoders. Replicating these thresholds via deadbands can block sensory noise from propagating through the continuous control loop \cite{mergner2010neurological,10.3389/fnhum.2021.615200}, breaking the loop around defined safe zones to suppress control authority without endangering stability \cite{icinco20}. Beyond rigid thresholds, cognitive neuroscience provides a framework for loop-breaking via Predictive Processing and Active Inference \cite{friston2017active,clark2013whatever}. In these frameworks, embodied agents minimize variational free energy (``surprise'') by altering the physical world to conform to internal hierarchical predictions \cite{lanillos2021active}. To avoid becoming trapped in feedback loops, biological systems dynamically modulate the ``precision'' (reliability) of sensory prediction errors \cite{pezzulo2015active}. In robotic control, engineering this adaptive precision-weighting acts as a neuromorphic deadband, allowing the controller to suppress transient sensorimotor errors \cite{lanillos2021active,oliver2022empirical}. This theoretical relationship is bidirectional: while active inference uses precision-weighting as a functional threshold, physical deadband models, such as the human-derived Disturbance Estimation and Compensation (DEC) framework, produce sensory reweighting as an emergent macroscopic behavior without requiring explicit gain scheduling \cite{lippi2017human}. This biologically plausible symmetry empowers the agent to balance stabilization with epistemic exploration, ensuring it does not overreact to its action-induced sensory noise \cite{lanillos2021active,oliver2022empirical,friston2017active}.

\textbf{Embodiment and the Limits of Offline Learning.}
The mathematical inevitability of the Action Error exposes a fundamental limit of offline learning: it treats the neural network as a disembodied artifact operating within an epistemic vacuum. Stabilizing a complex dynamic system mandates structural and physical embodiment, requiring continuous, situated interaction with the physical environment \cite{mergner2024embodiment}. Statically deploying a frozen model commits a fallacy of functional anthropomorphization \cite{mergner2024embodiment}; it expects robust adaptability while denying the system the continuous physical experience, or historical embodiment, necessary to co-evolve with a shifting state space \cite{mergner2024embodiment,mainzer2009embodied}. However, continuous adaptation risks recursive feedback degradation if the agent indiscriminately tracks its own self-produced errors. Safe historical co-evolution requires the agent to distinguish external physical disturbances from the trajectory shifts induced by its own parametric sub-optimality, a control-theoretic requirement that mirrors the biological somatosensory sense of self-agency (SSA) \cite{mergner2024embodiment}. By enforcing time-scale separation, the architecture achieves this balance: it maintains the continuous structural coupling required for historical embodiment \cite{mergner2024embodiment,mainzer2009embodied}, while mathematically bounding the network's authority to prevent it from recursively chasing its own sensory noise.

\textbf{Classical Adaptive Control and Robust MPC.} \looseness=-1
Architectures like Model Reference Adaptive Control (MRAC) \cite{ioannou1996robust,naranjo2025adaptive} and $\mathcal{L}_1$ Adaptive Control \cite{hovakimyan2010l1} mitigate parameter drift using robust bounds and low-pass filtering. They rely on linear-in-the-parameters (LIP) structures or continuous online tuning, which contrasts with static, non-convex offline models. While Robust Model Predictive Control (MPC) \cite{kouvaritakis2016robust} circumvents offline bias through online optimization, supervised neural approximations of these MPC laws \cite{fontes2009new} face spatial curvature artifacts that necessitate our proposed Hessian regularization and experience replay guardrails.

\textbf{Reinforcement Learning.}
These findings contextualize Reinforcement Learning (RL) \cite{sutton2018reinforcement}. While online RL circumvents identification bias through active interaction, offline RL remains vulnerable to the Action Error via out-of-distribution extrapolation \cite{fujimoto2019off,levine2020offline}. Continuous RL tracks moving targets but relies on exploratory noise, which is dangerous for safety-critical hardware like exoskeletons \cite{garcia2015comprehensive}, reintroducing the tension between Lyapunov stability and persistent excitation \cite{berkenkamp2017safe}.

\textbf{Structural Parameterizations, Certificates, and Advanced Safety Wrappers.}
Advancements in closed-loop system identification, such as Dual IOP \cite{chen2023dualiop}, Neural Identification \cite{boroujeni2025neural}, and the nonlinear generalization of Independent Component Identification (ICI) \cite{ici2025}, mitigate the initial Identification Bias by guaranteeing stabilized model extraction from closed-loop data. Mitigating the subsequent Action Error requires run-time safety. Certificate learning frameworks provide learned Lyapunov or Control Barrier Functions (CBFs) \cite{dawson2022safe}, which, like inherent Lyapunov structures \cite{min2023datadrivencontrolinherentlyapunov}, act as fast-loop spatial safety filters, while Safe Bayesian Optimization (BO) \cite{safeBO2024} provides probabilistic constraints during closed-loop tuning. Because these structural methods prioritize guaranteed bounds over spatial representation recovery, our proposed time-scale separated adaptation ($\lambda_{\max}(\Gamma) < 2/L_\theta$) does not replace them; rather, it acts as a slow-loop companion. By bounding the temporal update rate, we ensure neural adaptation remains sufficiently sluggish so as not to trigger or destabilize the fast-loop spatial constraints provided by CBFs or Safe BO. For deployment, this adaptation can be integrated with Seq-AMPC wrappers for provable constraint satisfaction \cite{reviewer_seqampc} or analyzed within NARX surrogate MPC stability bounds \cite{reviewer_narx}. Physics-informed IDA-PBC formulations \cite{reviewer_idapbc} and reservoir computing's ``mode-budgeting'' \cite{reviewer_reservoir} can absorb residual tracking errors and suppress unstable authority outside trained regions. Finally, integrating Instrumental Variable (IV)-based data-driven predictive control (DDPC) \cite{ivddpc2023} alongside our empirical bias proxy offers a structured pathway to validate closed-loop consistency prior to deployment, while bridging continuous SGD with formal non-PE adaptive identification schemes disarms the Identifiability Trap \cite{reviewer_nonPE}.

\textbf{Time-Varying Sensitivities, Noise, and Hybrid Systems.}
Although Theorem 1 establishes the Action Error via a local LTI approximation, deployments involve dynamic Jacobians that demand incremental stability ($\delta$-ISS) or bounded-variation arguments. In non-autonomous regimes, unmeasured process noise amplifies parametric sub-optimality, and discontinuous hybrid switching thresholds in exoskeletons \cite{lippi2021icinco} can induce impulse-like jumps in state sensitivity.

\textbf{Limitations and Future Work.} While the 1-DOF Duffing oscillator provides a geometric testbed to isolate the Action Error without the confounding variables of multi-DOF dynamic coupling, the present work functions as an analytical framework. Scaling this time-scale separated adaptation to multi-DOF hardware (e.g., lower-limb exoskeletons or humanoid robots) introduces coupled, time-varying Jacobians. In these higher-dimensional regimes, the sensitivity matrix $S_{k}$ becomes non-diagonal, and unmodeled cross-joint inertia amplifies parametric sub-optimality. The gradient clipping threshold must be scaled by the spectral norm of the coupled mass matrix to ensure the slow-loop Lipschitz continuity (Condition 2) is maintained across all joint axes simultaneously. To elevate this taxonomy to a comprehensive control methodology, future work prioritizes extending empirical validation to such hardware-in-the-loop deployments. This requires experimental benchmarking of our time-scale separated adaptation against classical robust architectures (MRAC, $\mathcal{L}_1$) and sequential aggregation (DAgger). Theoretically, while Remark 1 addresses continuous saturation-induced instability, efforts are required to formalize sensitivity bounds under discontinuous hybrid switching and time-varying references.

%

\bibliographystyle{apalike-ejor}
\setlength{\bibhang}{10pt}
\let\oldbibliography\thebibliography
\renewcommand{\thebibliography}[1]{%
  \oldbibliography{#1}%
  \setlength{\itemsep}{0pt}
  \setlength{\parskip}{0pt}
}
{\small
\bibliography{exampleShort}}

@Article{10759774,
  author   = {Lhoste, Clément and Barış Küçüktabak, Emek and Vianello, Lorenzo and Amato, Lorenzo and Short, Matthew R. and Lynch, Kevin M. and Pons, Jose L.},
  title    = {Deep-Learning Estimation of Weight Distribution Using Joint Kinematics for Lower-Limb Exoskeleton Control},
  journal  = {IEEE Transactions on Medical Robotics and Bionics},
  year     = {2025},
  volume   = {7},
  number   = {1},
  pages    = {20-26},
  doi      = {10.1109/TMRB.2024.3503922},
}

@Article{10556541,
  author   = {Belal, Mohammad and Alsheikh, Nael and Aljarah, Ahmad and Hussain, Irfan},
  title    = {Deep Learning Approaches for Enhanced Lower-Limb Exoskeleton Control: A Review},
  journal  = {IEEE Access},
  year     = {2024},
  volume   = {12},
  pages    = {143883-143907},
  doi      = {10.1109/ACCESS.2024.3414175},
}

@Article{tigrini2022long,
  author    = {Tigrini, Andrea and Verdini, Federica and Fioretti, Sandro and Mengarelli, Alessandro},
  title     = {Long term correlation and inhomogeneity of the inverted pendulum sway time-series under the intermittent control paradigm},
  journal   = {Communications in Nonlinear Science and Numerical Simulation},
  year      = {2022},
  volume    = {108},
  pages     = {106198},
  publisher = {Elsevier},
}

@Book{sutton2018reinforcement,
  title     = {Reinforcement learning: An introduction},
  publisher = {MIT press},
  year      = {2018},
  author    = {Sutton, Richard S and Barto, Andrew G},
}

@InProceedings{fujimoto2019off,
  author    = {Fujimoto, Scott and Meger, David and Precup, Doina},
  title     = {Off-Policy Deep Reinforcement Learning without Exploration},
  booktitle = {Proceedings of the 36th International Conference on Machine Learning},
  year      = {2019},
  editor    = {Chaudhuri, Kamalika and Salakhutdinov, Ruslan},
  volume    = {97},
  series    = {Proceedings of Machine Learning Research},
  pages     = {2052--2062},
  month     = {09--15 Jun},
  publisher = {PMLR},
  url       = {https://proceedings.mlr.press/v97/fujimoto19a.html},
}

@Misc{levine2020offline,
  author        = {Sergey Levine and Aviral Kumar and George Tucker and Justin Fu},
  title         = {Offline Reinforcement Learning: Tutorial, Review, and Perspectives on Open Problems},
  year          = {2020},
  archiveprefix = {arXiv},
  eprint        = {2005.01643},
  primaryclass  = {cs.LG},
  url           = {https://arxiv.org/abs/2005.01643},
}

@Article{garcia2015comprehensive,
  author  = {Garc{\'\i}a, Javier and Fern{\'a}ndez, Fernando},
  title   = {A comprehensive survey on safe reinforcement learning},
  journal = {Journal of Machine Learning Research},
  year    = {2015},
  volume  = {16},
  number  = {1},
  pages   = {1437--1480},
}

@InProceedings{berkenkamp2017safe,
  author    = {Berkenkamp, Felix and Turchetta, Matteo and Schoellig, Angela and Krause, Andreas},
  title     = {Safe model-based reinforcement learning with stability guarantees},
  booktitle = {Neural Information Processing Systems},
  year      = {2017},
  volume    = {30},
}

@Book{ljung1999system,
  title     = {System Identification: Theory for the User},
  publisher = {Prentice Hall},
  year      = {1999},
  author    = {Lennart Ljung},
  edition   = {2nd},
}

@Article{soderstrom2007errors,
  author  = {Torsten S{\"o}derstr{\"o}m},
  title   = {Errors-in-variables methods in system identification},
  journal = {Automatica},
  year    = {2007},
  volume  = {43},
  number  = {6},
  pages   = {929--958},
}

@Article{gilson2005instrumental,
  author  = {Marion Gilson and Paul M. J. Van den Hof},
  title   = {Instrumental variable methods for closed-loop system identification},
  journal = {Automatica},
  year    = {2005},
  volume  = {41},
  number  = {2},
  pages   = {241--249},
}

@Article{wensing2024optimization,
  author  = {Patrick M. Wensing and Michael Posa and Yue Hu and Adrien Escande and Nicolas Mansard and Andrea Del Prete},
  title   = {Optimization-based control for dynamic legged robots},
  journal = {IEEE Transactions on Robotics},
  year    = {2024},
  volume  = {40},
  pages   = {44--67},
}

@InProceedings{zhang2024humanoid,
  author       = {Zhang, Qiang and Cui, Peter and Yan, David and Sun, Jingkai and Duan, Yiqun and Han, Gang and Zhao, Wen and Zhang, Weining and Guo, Yijie and Zhang, Arthur and others},
  title        = {Whole-body humanoid robot locomotion with human reference},
  booktitle    = {2024 IEEE/RSJ International Conference on Intelligent Robots and Systems (IROS)},
  year         = {2024},
  pages        = {11225--11231},
  organization = {IEEE},
}

@Misc{li2024okami,
  author        = {Jinhan Li and Yifeng Zhu and Yuqi Xie and Zhenyu Jiang and Mingyo Seo and Georgios Pavlakos and Yuke Zhu},
  title         = {OKAMI: Teaching Humanoid Robots Manipulation Skills through Single Video Imitation},
  year          = {2024},
  archiveprefix = {arXiv},
  eprint        = {2410.11792},
  primaryclass  = {cs.RO},
  url           = {https://arxiv.org/abs/2410.11792},
}

@Article{lippi2018prediction,
  author    = {Vittorio Lippi},
  title     = {Prediction in the Context of a Human-Inspired Posture Control Model},
  journal   = {Robotics and Autonomous Systems},
  year      = {2018},
  volume    = {107},
  pages     = {63--70},
  month     = {10},
  issn      = {0921-8890},
  doi       = {10.1016/j.robot.2018.05.012},
  publisher = {Elsevier},
  url       = {https://www.sciencedirect.com/science/article/pii/S0921889017309107},
}

@InProceedings{ross2011reduction,
  author    = {St{\'e}phane Ross and Geoffrey Gordon and Drew Bagnell},
  title     = {A reduction of imitation learning and structured prediction to no-regret online learning},
  booktitle = {Proceedings of the Fourteenth International Conference on Artificial Intelligence and Statistics (AISTATS)},
  year      = {2011},
  pages     = {627--635},
}

@Article{naranjo2025adaptive,
  author  = {D. M. Naranjo and A. M. Annaswamy},
  title   = {Adaptive Control of Dubins Vehicle in the Presence of Loss of Effectiveness},
  journal = {IEEE Control Systems Letters},
  year    = {2025},
}

@Misc{fisher2025analytical,
  author = {P. A. Fisher and A. M. Annaswamy},
  title  = {Analytical Construction of CBF-Based Safety Filters for Simultaneous State and Input Constraints},
  year   = {2025},
  note   = {arXiv:2409.19352},
  eprint = {2409.19352},
}

@Article{forssell1999closedloop,
  author  = {Urban Forssell and Lennart Ljung},
  title   = {Closed-loop identification revisited},
  journal = {Automatica},
  year    = {1999},
  volume  = {35},
  number  = {7},
  pages   = {1215--1241},
  doi     = {10.1016/S0005-1098(99)00022-9},
}

@Article{pillonetto2025deep,
  author  = {Gianluigi Pillonetto and Aleksandr Y. Aravkin and Daniel Gedon and Lennart Ljung and Ant{\^o}nio H. Ribeiro and Thomas B. Sch{\"o}n},
  title   = {Deep networks for system identification: A survey},
  journal = {Automatica},
  year    = {2025},
  volume  = {171},
  pages   = {111907},
  doi     = {10.1016/j.automatica.2024.111907},
}

@InProceedings{lippi2021icinco,
  author       = {Vittorio Lippi and Cristian Camardella and Alessandro Filippeschi and Francesco Porcini},
  title        = {{Identification of Gait Phases with Neural Networks for Smooth Transparent Control of a Lower Limb Exoskeleton}},
  booktitle    = {Proceedings of the 18th International Conference on Informatics in Control, Automation and Robotics (ICINCO)},
  year         = {2021},
  pages        = {171--178},
  organization = {INSTICC},
  publisher    = {SciTePress},
  doi          = {10.5220/0010554401710178},
  isbn         = {978-989-758-522-7},
}

@Article{annaswamy2023integration,
  author  = {Anuradha M. Annaswamy and Anubhav Guha and Yingnan Cui and Sunbochen Tang and Peter A. Fisher and Joseph E. Gaudio},
  title   = {Integration of Adaptive Control and Reinforcement Learning for Real-Time Control and Learning},
  journal = {IEEE Transactions on Automatic Control},
  year    = {2023},
  volume  = {68},
  number  = {12},
  pages   = {7740--7755},
  doi     = {10.1109/TAC.2023.3290037},
}

@Misc{annaswamy2023lessons,
  author       = {Anuradha M. Annaswamy},
  title        = {Lessons from Adaptive Control: Towards Real-Time Machine Learning},
  howpublished = {Keynote Address at the 20th International Conference on Informatics in Control, Automation and Robotics (ICINCO)},
  year         = {2023},
  note         = {Available at: \url{https://icinco.scitevents.org/KeynoteSpeakers.aspx?y=2023}},
  address      = {Rome, Italy},
}

@Article{narendra1990identification,
  author    = {K. S. Narendra and K. Parthasarathy},
  title     = {Identification and Control of Dynamical Systems Using Neural Networks},
  journal   = {IEEE Transactions on Neural Networks},
  year      = {1990},
  volume    = {1},
  number    = {1},
  pages     = {4--27},
  doi       = {10.1109/72.80202},
  publisher = {IEEE},
}

@Article{shumailov2024ai,
  author  = {Ilia Shumailov and Zakhar Shumaylov and Yiren Zhao and Yarin Gal and Nicolas Papernot and Ross Anderson},
  title   = {AI models collapse when trained on recursively generated data},
  journal = {Nature},
  year    = {2024},
  volume  = {631},
  pages   = {755--759},
  doi     = {10.1038/s41586-024-07566-y},
}

@Misc{lovland2025feedback,
  author        = {Kristian Lindb{\"a}ck L{\o}vland and Lars Imsland and Bjarne Andr{\'e} Grimstad},
  title         = {On a Closed-Loop Identification Challenge in Feedback Optimization},
  year          = {2025},
  archiveprefix = {arXiv},
  eprint        = {2509.01188},
  primaryclass  = {math.OC},
}

@misc{chen2023dualiop,
  author = {Ran Chen and Amber Srivastava and Mingzhou Yin and Roy S. Smith},
  title = {Closed-Loop Identification of Stabilized Models Using Dual Input-Output Parameterization},
  year = {2023},
  eprint = {2311.09019},
  archivePrefix = {arXiv},
  primaryClass = {math.OC}
}

@misc{srivastava2023dualslp,
  author = {Amber Srivastava and Mingzhou Yin and Andrea Iannelli and Roy S. Smith},
  title = {A Dual System-Level Parameterization for Identification from Closed-Loop Data},
  year = {2023},
  eprint = {2304.02379},
  archivePrefix = {arXiv},
  primaryClass = {math.OC}
}

@article{gonzalez2024consistency,
  author = {Rodrigo A. Gonz{\'a}lez and Siqi Pan and Cristian R. Rojas},
  title = {Consistency analysis of refined instrumental variable methods for continuous-time system identification in closed-loop},
  journal = {Automatica},
  volume = {166},
  year = {2024}
}

@Misc{recursive2025iv,
  author        = {Koen Classens and Rodrigo A. González and Tom Oomen},
  title         = {Recursive Identification of Structured Systems: An Instrumental-Variable Approach Applied to Mechanical Systems},
  year          = {2025},
  archiveprefix = {arXiv},
  eprint        = {2504.17927},
  primaryclass  = {eess.SY},
  url           = {https://arxiv.org/abs/2504.17927},
}

@Misc{min2023datadrivencontrolinherentlyapunov,
  author        = {Youngjae Min and Spencer M. Richards and Navid Azizan},
  title         = {Data-Driven Control with Inherent Lyapunov Stability},
  year          = {2023},
  archiveprefix = {arXiv},
  eprint        = {2303.03157},
  primaryclass  = {eess.SY},
  url           = {https://arxiv.org/abs/2303.03157},
}

@Misc{boroujeni2025neural,
  author        = {Mahrokh G. Boroujeni and Laura Meroi and Leonardo Massai and Clara L. Galimberti and Giancarlo Ferrari-Trecate},
  title         = {Neural Identification of Feedback-Stabilized Nonlinear Systems},
  year          = {2025},
  archiveprefix = {arXiv},
  eprint        = {2503.22601},
  primaryclass  = {eess.SY},
}

@InProceedings{10644154,
  author    = {Basyal, Sujata and Ting, Jonathan and Mishra, Kislaya and Allen, Brendon C.},
  title     = {Concurrent Learning and Lyapunov-Based Updates of Deep Neural Networks for Euler-Lagrange Dynamic Systems},
  booktitle = {2024 American Control Conference (ACC)},
  year      = {2024},
  pages     = {15--20},
  doi       = {10.23919/ACC60939.2024.10644154},
}

@Article{Dohare2024,
  author  = {Dohare, Shibhansh and Hernandez-Garcia, J. Fernando and Lan, Qingfeng and Rahman, Parash and Mahmood, A. Rupam and Sutton, Richard S.},
  title   = {Loss of plasticity in deep continual learning},
  journal = {Nature},
  year    = {2024},
  volume  = {632},
  number  = {8026},
  pages   = {768--774},
  doi     = {10.1038/s41586-024-07711-7},
}

@InProceedings{mamou2014neural,
  author       = {M. A. Mamou and N. Saadia},
  title        = {Neural Networks Controler of a Lower Limbs Robotic Rehabilitation Chair},
  booktitle    = {Proceedings of the International Conference on Biomedical Electronics and Devices - Volume 1: BIODEVICES, (BIOSTEC 2014)},
  year         = {2014},
  pages        = {65-71},
  organization = {INSTICC},
  publisher    = {SciTePress},
  doi          = {10.5220/0004700700650071},
  isbn         = {978-989-758-013-0},
}

@InProceedings{fontes2009new,
  author       = {A. {de Barros Fontes} and P. Amorim and M. {Ribeiro da Silva Garcia}},
  title        = {A New Approach of the Neural Predictive Control Applied in a Thermoelectric Module},
  booktitle    = {Proceedings of the 6th International Conference on Informatics in Control, Automation and Robotics - Volume 3: ICINCO,},
  year         = {2009},
  pages        = {189-194},
  organization = {INSTICC},
  publisher    = {SciTePress},
  doi          = {10.5220/0002193001890194},
  isbn         = {978-989-8111-99-9},
}

@Article{gustavsson1977identification,
  author  = {Ivar Gustavsson and Lennart Ljung and Torsten S{\"o}derstr{\"o}m},
  title   = {Identification of processes in closed loop--identifiability and accuracy aspects},
  journal = {Automatica},
  year    = {1977},
  volume  = {13},
  number  = {1},
  pages   = {59--75},
  note    = {Originally presented at the 4th IFAC Symposium on Identification and System Parameter Estimation, Tbilisi, 1976},
  doi     = {10.1016/0005-1098(77)90008-8},
}

@Article{martens2020new,
  author  = {Martens, James},
  title   = {New Insights and Perspectives on the Natural Gradient Method},
  journal = {Journal of Machine Learning Research},
  year    = {2020},
  volume  = {21},
  number  = {146},
  pages   = {1--76},
  url     = {http://jmlr.org/papers/v21/17-678.html},
}

@InProceedings{miyato2018spectral,
  author    = {Miyato, Takeru and Kataoka, Toshiki and Koyama, Masanori and Yoshida, Yuichi},
  title     = {Spectral Normalization for Generative Adversarial Networks},
  booktitle = {International Conference on Learning Representations (ICLR)},
  year      = {2018},
}

@Book{ioannou1996robust,
  title     = {Robust Adaptive Control},
  publisher = {Dover Publications, Incorporated},
  year      = {2012},
  author    = {Ioannou, P.A. and Sun, J.},
  series    = {Dover Books on Electrical Engineering Series},
  isbn      = {9780486498171},
  lccn      = {2012022707},
  url       = {https://books.google.de/books?id=pXWFY_vbg1MC},
}

@Book{hovakimyan2010l1,
  title     = {$\mathcal{L}_1$ Adaptive Control Theory: Guaranteed Robustness with Fast Adaptation},
  publisher = {Society for Industrial and Applied Mathematics},
  year      = {2010},
  author    = {Hovakimyan, Naira and Cao, Chengyu},
  series    = {Advances in Design and Control},
  isbn      = {9780898717044},
  lccn      = {2010013646},
  url       = {https://books.google.de/books?id=U2Giej-1bukC},
}

@Book{kouvaritakis2016robust,
  title     = {Model Predictive Control: Classical, Robust and Stochastic},
  publisher = {Springer},
  year      = {2016},
  author    = {Kouvaritakis, Basil and Cannon, Mark},
  series    = {Advanced Textbooks in Control and Signal Processing},
  address   = {Cham},
  isbn      = {978-3-319-24851-6},
  doi       = {10.1007/978-3-319-24853-0},
}

@Misc{reviewer_narx,
  author        = {Lea Bold and Irene Schimperna and Karl Worthmann and Johannes Köhler},
  title         = {Exponential stability of data-driven nonlinear MPC based on input/output models},
  year          = {2026},
  archiveprefix = {arXiv},
  eprint        = {2603.16808},
  primaryclass  = {math.OC},
  url           = {https://arxiv.org/abs/2603.16808},
}

@PhdThesis{reviewer_idapbc,
  author = {Alessio Moreschini},
  title  = {Modeling and control of discrete-time and sampled-data port-Hamiltonian systems},
  school = {Sapienza University of Rome},
  year   = {2021},
  note   = {See Section 5.2.1: Constructive aspects: an approximate IDA-PBC design},
}

@Misc{reviewer_nonPE,
  author        = {Rebecca G. Hart and Omkar Sudhir Patil and Zachary I. Bell and Warren E. Dixon},
  title         = {System Identification and Control Using Lyapunov-Based Deep Neural Networks without Persistent Excitation: A Concurrent Learning Approach},
  year          = {2025},
  archiveprefix = {arXiv},
  eprint        = {2505.10678},
  primaryclass  = {eess.SY},
  url           = {https://arxiv.org/abs/2505.10678},
}

@Misc{reviewer_reservoir,
  author        = {Satoshi Oishi and Hiroshi Yamashita and Hideyuki Suzuki and Sho Shirasaka},
  title         = {Stabilizing chaotic dynamical system reproduction in reservoir computing},
  year          = {2026},
  archiveprefix = {arXiv},
  eprint        = {2602.11069},
  primaryclass  = {nlin.CD},
  url           = {https://arxiv.org/abs/2602.11069},
}

@Misc{reviewer_seqampc,
  author        = {Mihaela-Larisa Clement and M{\'o}nika Farsang and Agnes Poks and Johannes Edelmann and Manfred Pl{\"o}chl and Radu Grosu and Ezio Bartocci},
  title         = {Towards Safe Learning-Based Non-Linear Model Predictive Control through Recurrent Neural Network Modeling},
  year          = {2026},
  archiveprefix = {arXiv},
  eprint        = {2603.24503},
  primaryclass  = {cs.LG},
  url           = {https://arxiv.org/abs/2603.24503},
}

@InProceedings{icinco20,
  author    = {Vittorio Lippi and Fabio Molinari},
  title     = {{Lyapunov Stability of a Nonlinear Bio-inspired System for the Control of Humanoid Balance}},
  booktitle = {Proc. 17th Int. Conf. on Informatics in Control, Autom. and Robotics (ICINCO)},
  year      = {2020},
  pages     = {726--733},
  doi       = {10.5220/0009970307260733},
}

@MastersThesis{paul2020dynamic,
  author = {Paul, Elizabeth Napoleon},
  title  = {{Dynamic Analysis of Compliant Joint for Assistive Exoskeleton}},
  school = {Aalborg University},
  year   = {2020},
}

@Misc{Schmalz2021.03.16.435713,
  author       = {Schmalz, Jelena and Paul, David and Shorter, Kathleen and Schmalz, Xenia and Cooper, Matthew and Murphy, Aron},
  title        = {{Modelling Human Gait using a Nonlinear Differential Equation}},
  year         = {2021},
  doi          = {10.1101/2021.03.16.435713},
  elocation-id = {2021.03.16.435713},
  eprint       = {https://www.biorxiv.org/content/early/2021/07/25/2021.03.16.435713.full.pdf},
  journal      = {bioRxiv},
  publisher    = {Cold Spring Harbor Laboratory},
  url          = {https://www.biorxiv.org/content/early/2021/07/25/2021.03.16.435713},
}

@InProceedings{alemohammad2024self,
  author    = {Alemohammad, Sina and others},
  title     = {Self-Consuming Generative Models Go MAD},
  booktitle = {Proc. Int. Conf. on Learning Representations (ICLR)},
  year      = {2024},
}

@Misc{stranak2026entropic,
  author    = {Pavel Stra{\v{n}}{\'a}k},
  title     = {{Entropic Limits of Iterative Computation in Generative AI: Model Collapse Explained by the Data Processing Inequality and the AI Theorem}},
  month     = {April},
  year      = {2026},
  doi       = {10.20944/preprints202507.2260.v2},
  journal   = {Preprints},
  publisher = {Preprints.org},
  url       = {https://doi.org/10.20944/preprints202507.2260.v2},
}

@InProceedings{gerstgrasser2024accumulating,
  author    = {Gerstgrasser, Matthias and others},
  title     = {{Accumulating Data Avoids Model Collapse}},
  booktitle = {NeurIPS 2024 Workshop on Mathematics of Modern Machine Learning},
  year      = {2024},
  url       = {https://openreview.net/forum?id=4DYFHAcgw3},
}

@Article{kobel2021vestibular,
  author    = {Kobel, Megan J and Wagner, Andrew R and Merfeld, Daniel M and Mattingly, Jameson K},
  title     = {{Vestibular thresholds: a review of advances and challenges in clinical applications}},
  journal   = {Frontiers in Neurology},
  year      = {2021},
  volume    = {12},
  pages     = {643634},
  publisher = {Frontiers Media SA},
}

@InProceedings{hettich2013human,
  author       = {Hettich, Georg and Lippi, Vittorio and Mergner, Thomas},
  title        = {Human-like sensor fusion mechanisms in a postural control robot},
  booktitle    = {Special Session on Sensory Fusion for Diagnostics and Neurorehabilitation},
  year         = {2013},
  volume       = {2},
  pages        = {152--160},
  organization = {SCITEPRESS},
}

@Article{mergner2010neurological,
  author   = {T. Mergner},
  title    = {A neurological view on reactive human stance control},
  journal  = {Annual Reviews in Control},
  year     = {2010},
  volume   = {34},
  number   = {2},
  pages    = {177-198},
  issn     = {1367-5788},
  doi      = {https://doi.org/10.1016/j.arcontrol.2010.08.001},
  url      = {https://www.sciencedirect.com/science/article/pii/S1367578810000404},
}

@Article{10.3389/fnhum.2021.615200,
  author   = {Akçay, Mustafa Emre and others},
  title    = {Visual Modulation of Human Responses to Support Surface Translation},
  journal  = {Front Hum Neurosci},
  year     = {2021},
  volume   = {Volume 15 - 2021},
  issn     = {1662-5161},
  doi      = {10.3389/fnhum.2021.615200},
  url      = {https://www.frontiersin.org/journals/human-neuroscience/articles/10.3389/fnhum.2021.615200},
}

@InProceedings{ici2025,
  author    = {Boroujeni, Mahrokh G. and Meroi, Laura and Massai, Leonardo and Galimberti, Clara L. and Ferrari-Trecate, Giancarlo},
  title     = {{Neural Identification of Feedback-Stabilized Nonlinear Systems}},
  booktitle = {2025 IEEE 64th Conference on Decision and Control (CDC)},
  year      = {2025},
  pages     = {4201--4207},
  doi       = {10.1109/CDC57313.2025.11312335},
}

@Article{dawson2022safe,
  author  = {Dawson, Charles and Gao, Sicun and Fan, Chuchu},
  title   = {Safe Control With Learned Certificates: A Survey of Neural Lyapunov, Barrier, and Contraction Methods for Robotics and Control},
  journal = {IEEE Transactions on Robotics},
  year    = {2023},
  volume  = {39},
  number  = {3},
  pages   = {1749--1767},
  doi     = {10.1109/TRO.2022.3232542},
}

@InProceedings{safeBO2024,
  author    = {Hirt, Sebastian and Pfefferkorn, Maik and Findeisen, Rolf},
  title     = {Safe and Stable Closed-Loop Learning for Neural-Network-Supported Model Predictive Control},
  booktitle = {2024 IEEE 63rd Conference on Decision and Control (CDC)},
  year      = {2024},
  doi       = {10.1109/CDC56724.2024.10886624},
}

@Article{ivddpc2023,
  author  = {Wang, Yibo and Qiu, Yiwen and Sader, Malika and Huang, Dexian and Shang, Chao},
  title   = {Data-Driven Predictive Control Using Closed-Loop Data: An Instrumental Variable Approach},
  journal = {IEEE Control Systems Letters},
  year    = {2023},
  volume  = {7},
  pages   = {3639--3644},
  doi     = {10.1109/LCSYS.2023.3340444},
}

@InCollection{hettich2015human,
  author    = {Hettich, Georg and Lippi, Vittorio and Mergner, Thomas},
  title     = {{Human-Like Sensor Fusion Implemented in the Posture Control of a Bipedal Robot}},
  booktitle = {Neurotechnology, Electronics, and Informatics},
  publisher = {Springer},
  year      = {2015},
  pages     = {29--45},
}

@InProceedings{kay2024epistemic,
  author    = {Kay, Jackie and Kasirzadeh, Atoosa and Mohamed, Shakir},
  title     = {Epistemic Injustice in Generative AI},
  booktitle = {Proceedings of the 2024 AAAI/ACM Conference on AI, Ethics, and Society},
  year      = {2025},
  series    = {AIES '24},
  pages     = {684–697},
  publisher = {AAAI Press},
  location  = {San Jose, California, USA},
  numpages  = {14},
}

@Article{friston2017active,
  author   = {Friston, Karl and FitzGerald, Thomas and Rigoli, Francesco and Schwartenbeck, Philipp and Pezzulo, Giovanni},
  title    = {{Active Inference: A Process Theory}},
  journal  = {Neural Computation},
  year     = {2017},
  volume   = {29},
  number   = {1},
  pages    = {1-49},
  month    = {01},
  issn     = {0899-7667},
  doi      = {10.1162/NECO_a_00912},
  eprint   = {https://direct.mit.edu/neco/article-pdf/29/1/1/984132/neco_a_00912.pdf},
  url      = {https://doi.org/10.1162/NECO_a_00912},
}

@Article{clark2013whatever,
  author  = {Clark, Andy},
  title   = {Whatever next? Predictive brains, situated agents, and the future of cognitive science},
  journal = {Behavioral and Brain Sciences},
  year    = {2013},
  volume  = {36},
  number  = {3},
  pages   = {181–204},
  doi     = {10.1017/S0140525X12000477},
}

@Article{lippi2017human,
  author    = {Lippi, Vittorio and Mergner, Thomas},
  title     = {Human-derived disturbance estimation and compensation ({DEC}) method lends itself to a modular sensorimotor control in a humanoid robot},
  journal   = {Frontiers in Neurorobotics},
  year      = {2017},
  volume    = {11},
  pages     = {49},
  doi       = {10.3389/fnbot.2017.00049},
  publisher = {Frontiers Media SA},
}

@Misc{lanillos2021active,
  author        = {Pablo Lanillos and Cristian Meo and Corrado Pezzato and Ajith Anil Meera and Mohamed Baioumy and Wataru Ohata and Alexander Tschantz and Beren Millidge and Martijn Wisse and Christopher L. Buckley and Jun Tani},
  title         = {Active Inference in Robotics and Artificial Agents: Survey and Challenges},
  year          = {2021},
  archiveprefix = {arXiv},
  eprint        = {2112.01871},
  primaryclass  = {cs.RO},
  url           = {https://arxiv.org/abs/2112.01871},
}

@Article{pezzulo2015active,
  author    = {Pezzulo, Giovanni and Rigoli, Francesco and Friston, Karl},
  title     = {Active inference, epistemic value, and vicarious trial and error},
  journal   = {Learning \& Memory},
  year      = {2015},
  volume    = {22},
  number    = {6},
  pages     = {322--338},
  publisher = {Cold Spring Harbor Lab},
}

@Article{oliver2022empirical,
  author   = {Oliver, Guillermo and Lanillos, Pablo and Cheng, Gordon},
  title    = {An Empirical Study of Active Inference on a Humanoid Robot},
  journal  = {IEEE Transactions on Cognitive and Developmental Systems},
  year     = {2022},
  volume   = {14},
  number   = {2},
  pages    = {462-471},
  doi      = {10.1109/TCDS.2021.3049907},
}

@InBook{mergner2024embodiment,
  pages     = {553--579},
  title     = {Embodiment and Humanoid Robotics},
  publisher = {Springer Fachmedien Wiesbaden},
  year      = {2024},
  author    = {Mergner, Thomas and Funk, Michael and Lippi, Vittorio},
  editor    = {Mainzer, Klaus},
  address   = {Wiesbaden},
  isbn      = {978-3-658-19606-6},
  booktitle = {Philosophisches Handbuch K{\"u}nstliche Intelligenz},
  doi       = {10.1007/978-3-658-19606-6_23},
  url       = {https://doi.org/10.1007/978-3-658-19606-6_23},
}

@Article{mainzer2009embodied,
  author   = {Klaus Mainzer},
  title    = {From embodied mind to embodied robotics: Humanities and system theoretical aspects},
  journal  = {Journal of Physiology-Paris},
  year     = {2009},
  volume   = {103},
  number   = {3},
  pages    = {296-304},
  issn     = {0928-4257},
  note     = {Neurorobotics},
  doi      = {https://doi.org/10.1016/j.jphysparis.2009.08.012},
  url      = {https://www.sciencedirect.com/science/article/pii/S0928425709000539},
}

\end{document}